\newcommand*{\addFileDependency}[1]{
  \typeout{(#1)}
  \@addtofilelist{#1}
  \IfFileExists{#1}{}{\typeout{No file #1.}}
}
\DeclarePairedDelimiter\cbracket{\{}{\}}
\DeclarePairedDelimiter\norm{\|}{\|}
\DeclarePairedDelimiter\bracket{[}{]}
\DeclarePairedDelimiter\paran{(}{)}
\DeclareMathOperator{\EE}{\mathbb{E}}
\DeclareMathOperator{\Unif}{Unif}
\newcommand{\Qopt}[1]{Q_{\vec{p}^*(#1)}}
\renewcommand{\vec}[1]{\mathbf{#1}}
\newtheorem{lemma}{Lemma}
\newtheorem{definition}{Definition}
\newtheorem{thm}{Theorem}
\DeclareMathOperator*{\argmax}{arg\,max}
\DeclareMathOperator*{\argmin}{arg\,min}
\setlist[itemize]{noitemsep}
\newcommand{\rootpath}{.}
\newcommand{\figurepath}{\rootpath/figures}
\icmltitlerunning{Out-of-Distribution Robustness in Deep Learning Compression}
\begin{document}

\twocolumn[
\icmltitle{Out-of-Distribution Robustness in Deep Learning Compression}




\begin{icmlauthorlist}
\icmlauthor{Eric Lei}{ese}
\icmlauthor{Hamed Hassani}{ese}
\icmlauthor{Shirin Saeedi Bidokhti}{ese}

\end{icmlauthorlist}

\icmlaffiliation{ese}{Department of Electrical and Systems Engineering, University of Pennsylvania, Philadelphia, PA, USA}

\icmlcorrespondingauthor{Eric Lei}{elei@seas.upenn.edu}

\icmlkeywords{Compression, Machine Learning, Robustness}

\vskip 0.3in
]



\printAffiliationsAndNotice{}  

\begin{abstract}
In recent years, deep neural network (DNN) compression systems have proved to be highly effective for designing source codes for many natural sources. However, like many other machine learning  systems, these compressors suffer from vulnerabilities to distribution shifts as well as out-of-distribution (OOD) data, which reduces their real-world applications. In this paper, we initiate the study of OOD robust compression. Considering  robustness to two types of ambiguity sets (Wasserstein balls and group shifts), we propose algorithmic and architectural frameworks built on two principled methods: one that trains DNN compressors using distributionally-robust optimization (DRO), and the other which uses a structured latent code. Our results demonstrate that both methods enforce robustness compared to a standard DNN compressor, and that using a structured code can be superior to the DRO compressor. We observe tradeoffs between robustness and distortion and corroborate these findings theoretically for a specific class of sources.  
\end{abstract}

\section{Introduction}
\label{intro}
Deep neural network-based compressors have achieved great success in lossy source coding, oftentimes outperforming traditional compression schemes on real-world data in terms of minimizing distortion and producing visually pleasing reconstructions at reasonable complexity \cite{Balle2017, Theis2017a, SoftToHardVQ, NTC, WagnerDNN}. These methods typically use an autoencoder architecture with quantization of the latent variables, which is trained over samples drawn from the source (i.e. data distribution) $X \sim P_D$ to jointly minimize rate and the distortion between $X$ and its reconstruction $\hat{X}$. The performance of such methods can be attributed to the ability of DNNs to approximate arbitrary functions and generalize to $P_D$. 

	
While these compressors have been shown to perform well on test data drawn from the same distribution as the training data (\emph{in-distribution} data), they are highly susceptible to simple modifications of the data they are trained on.
In Fig.~\ref{fig:OOD_example}, we demonstrate how slight modifications such as rotations, noise, and small perturbations prior to compression can highly distort the decompressed image compared to images that follow the base distribution. Comparatively, traditional compressors like JPEG do not suffer from such abrupt performance degradations, and perform more consistently for natural images. The lack of robustness in DNN compression, along with learned compressors' dependence on training data, limits its application in the real-world and as a replacement for traditional compressors. Hence, it is imperative to develop DNN compressors that provide resilience to perturbed or \emph{out-of-distribution} (OOD) data. 

\begin{figure}[t]
    \centering
    \begin{subfigure}{.35\textwidth}
        \centering
        \includegraphics[width=1\linewidth]{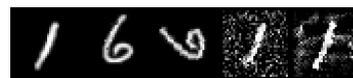}
        \vspace{-2em}
        \caption{Prior to compression.}
    \end{subfigure}%
    \\
    \begin{subfigure}{.35\textwidth}
        \centering
        \includegraphics[width=1\linewidth]{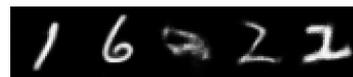}
        \vspace{-2em}
        \caption{Decompressed.}
    \end{subfigure}%
    \caption{Effect of DNN compressors on slightly modified versions of a MNIST ``1'' and ``6''. Left to right: original, original, rotated, additive white Gaussian noise, adversarial noise.}
    \label{fig:OOD_example}
\end{figure}

In the information theory literature,  settings such as noisy source coding and universal lossy compression \cite{NoisySourceCoding, SingleShotRedundancy, ChouEffros}, which may provide resilience to OOD sources, have been previously investigated. However, while the difficulty of compressing real-world sources and high computational requirements of optimal compression schemes (e.g. vector quantization) have been mitigated by DNN models, these universal settings pose an even harder problem, and remain highly intractable in practice. Can DNNs also help bridge the gap between these settings and practical schemes?

Further questions remain. In designing compressors that provide robustness to OOD inputs, is there a price to pay in terms of their performance on in-distribution data? If so, are these tradeoffs \textit{fundamental}, in that no amount of data or computational power can help remove such tradeoffs? Lastly, do we need to rethink how DNN-based compressors are designed in order to help make them more robust, rather than learning DNN-based codes in an ad-hoc manner? 

In what follows, we provide a novel framework for designing robust compressors based on distributionally-robust optimization (DRO) and inspired by learning under distribution shifts \cite{biggio2013evasion,szegedy,carlini2017,madry2017towards, hendrycks2019natural, Duchi2018LearningMW, Robey2020ModelBasedRD, Taori2020MeasuringRT} which is a parallel area of research. We provide two solutions using DNNs. The first uses an end-to-end architecture that solves a DRO min-max objective function. The second method, inspired by successive refinement from information theory \cite{SuccessiveRefinement}, imposes structure within the code (in the latent space), where some portion of the bits are used to encode the source $P_D$, and additional bits help encode perturbations from $P_D$.

\vspace{-1.em}
\paragraph{Contributions.} Our contributions are as follows. \vspace{-1.1em}
\begin{itemize} 
    \item We formulate a new problem for designing robust compressors in which we enforce robustness by minimizing worst-case distortions over a set of distributions. We first focus on a specific set of distributions: the Wasserstein ball centered at $P_D$.
    \item We provide an algorithm to implement such a compressor in an end-to-end fashion by leveraging advances made in distributionally-robust optimization, and a structured architecture based on successive refinement, where the learned code can progressively encode more information for distributions that are further OOD. 
    \item We empirically show that both methods are effective in providing robustness to OOD inputs, and that enforcing structure in the model architecture can actually help it outperform simply optimizing the DRO objective.
    \item We additionally evaluate our framework on distributions induced by group shifts of $X \sim P_D$. We show that adding structure in the model still results in enforcing more robustness. 
    \item We observe that there is a tradeoff between minimizing distortion on OOD inputs and those that are in-distribution, and that the tradeoff gets worse as more robustness is desired. We corroborate these empirical findings with an analytical tradeoff for single-shot codes when compressing a set of uniform sources.  
\end{itemize}


\section{Problem Formulation}
In all the data compression settings in this paper, we only consider single-shot (non-asymptotic) scenarios since DNN compressors typically operate only on single realizations.
\subsection{Preliminaries: Standard Compressor}
 Let $\mathcal{W}\subset \mathbb{R}$ be a discrete set, and let $(f,g)$ be a fixed-rate single-shot code of rate $m \log |\mathcal{W}|$ where $f:\mathbb{R}^n \rightarrow \mathcal{W}^m$ is an encoder and $g:\mathcal{W}^m \rightarrow \mathbb{R}^n$ is a decoder.  In  standard compression, we want to compress a source $X \sim P_D \in \mathcal{P}(\mathbb{R}^n)$ by finding the code that minimizes the expected distortion. Standard DNN compressors simply parametrize $f$, $g$ as DNNs $f_\theta$, $g_\phi$ with parameters $\theta$, $\phi$ respectively, where $f_\theta$ also quantizes each dimension of its output layer independently to values in $\mathcal{W}$ using a nearest-neighbor quantizer.

Since we only consider fixed-rate codes, we do not try to minimize the expected length of the codes, which is typically done by minimizing the entropy of the quantizer output. Hence, the objective is solely to minimize the expected distortion using standard DNN training methods:
\begin{equation}
    \min_{\theta,\phi} \EE_{P_D}[d(X, g_\phi(f_\theta(X)))]
    \label{eq:standard}
\end{equation}
where $d:\mathbb{R}^n \times \mathbb{R}^n \rightarrow \mathbb{R}$ is a distortion measure. 

The only non-trivial aspect of training is when performing backpropagation on the encoder $f_\theta$, since the quantizer is not differentiable. We use the method described in \cite{SoftToHardVQ}, where the quantizer is used during forward passes, but not during backpropagation, where the gradient of a soft quantizer (softmax function) is used instead.

\subsection{Out-of-Distribution-Robust Compression}
We now introduce the robust single-shot compression problem. In learning-based models, training is performed on data that comes from a data distribution $P_D$. In real-world, however, test data does not always come from $P_D$, and instead, it may come from a shifted version $P'_D$ which is often unknown.   For example, if a model is trained on image data, images during test time may go under different shifts such as rotation, translation, blurriness, contrast, or in natural images there may be different weather conditions or backgrounds, which the model may not have seen during training. As shown in fig.~\ref{fig:OOD_example}, this problem can be catastrophic. 

One way to approach this problem is to assume that the distributions of data seen at test time belong to a family of distributions $\mathcal{P}$ and solve the minimax distortion problem:
\begin{equation}
    \min_{\theta, \phi} \sup_{P \in \mathcal{P}} \mathbb{E}_{P} [d(X, g_\phi(f_\theta(X)))].
    \label{eq:problem_formulation}
\end{equation}

Here, we can establish connections to previous results in information theory. Many of the those results consider codes that minimize the worst-case distortion-redundancy \cite{NoisySourceCoding, UniversalAsymptotic}; that is, the difference between the distortion incurred using a universal code and that of the optimal code. In asymptotic settings, it is well-known that universal codes exist that are optimal for all sources in the considered set $\mathcal{P}$, and single-shot results have been shown more recently in \cite{SingleShotRedundancy}. 

In what follows, we study two specific cases of $\mathcal{P}$. The first one chooses $\mathcal{P}$ in \eqref{eq:problem_formulation} to be a Wasserstein ball of radius $\rho$ around $P_D$, i.e. 
\begin{equation}
    \min_{\theta, \phi} \sup_{P: W_c(P, P_D) < \rho} \EE_{P}\bracket*{d(X, g_\phi(f_\theta(X)))} 
    \label{eq:robust}
\end{equation}
where the $p$-Wasserstein distance between $\mu$ and $\nu$ is $W_c(\mu,\nu):=\left(\inf_{\pi\in\Pi(\mu,\nu)} \mathbb{E}_{X,X'\sim\pi} [c(X,X')^p]\right)^{1/p}$ with $c:\mathbb{R}^n \times \mathbb{R}^n \rightarrow [0,\infty)$ the transportation cost. (Small) translations, blurring, noise and other types of distribution shifts can be captured through this model. 
Being robust to all distributions in $W_c(P, P_D)$ is, hence, a reasonable design requirement for any compressor whereas the generic set  $\mathcal{P}$ may be too broad and not  as practically relevant. In addition, the formulation in \eqref{eq:robust}  allows us to control the level of robustness with $\rho$ while also ensuring the set of distributions considered is rich enough, as opposed to $f$-divergence balls which are more limited in containing relevant distributions \cite{Gao2016DistributionallyRS}. In theory and design, using the Wasserstein distance allows for tractable implementations during optimization of \eqref{eq:robust}, as will be described in the next section. 

The second case of $\mathcal{P}$ pertains to group shifts on $X \sim P_D$. Specifically, we consider the set of distributions induced group actions $c \in \mathcal{G}$ on $X$, i.e. $\mathcal{P} = \{P_{c\cdot X}: c \in \mathcal{G} \}$ with the objective
\begin{equation}
    \min_{\theta, \phi} \sup_{c \in \mathcal{G}} \EE_{P_D} [d(c\cdot X, g_\phi(f_\theta(c \cdot X)))]
\end{equation}
This choice of $\mathcal{P}$ can capture more naturally occuring distribution shifts, such as all spatial transformations (shift, warp, rotate, etc.). For organizational purposes, we will further discuss this setting in Section~\ref{sec:group}.

We next propose two methods to solve \eqref{eq:robust}, and demonstrate tradeoffs that arise compared to solving \eqref{eq:standard}. 

\section{Robust Neural-Network Compression}

\subsection{Enforcing Distributional Robustness}
\begin{algorithm}[tb]
   \caption{End-to-End DRO Compressor}
   \label{alg:DRO}
    \begin{algorithmic}
       \WHILE{not converged}
       \STATE Sample a batch $x \sim P_D$, and let $x_0' = x$
       \FOR{$k=0, \dots, K-1$}
       \STATE \hspace{-1em} $x'_{k+1} \leftarrow  x'_k +  \frac{1}{\sqrt{k}}\nabla_{x'} \paran*{d(x', g_\phi(f_\theta(x'))) - \gamma c(x', x)}$
       \ENDFOR
       \STATE{$\theta_{t+1} \leftarrow \theta_t - \eta_t \nabla_{\theta} d(x'_K, g_{\phi_t}(f_{\theta_t}(x'_K))) $}
       \STATE{$\phi_{t+1} \leftarrow \phi_t - \eta_t \nabla_{\phi} d(x'_K, g_{\phi_t}(f_{\theta_t}(x'_K))) $}
       \ENDWHILE
    \end{algorithmic}
\end{algorithm}

In order to solve \eqref{eq:robust}, we leverage duality results in \cite{Gao2016DistributionallyRS, CertifiedDRO, Blanchet2016QuantifyingDM, Esfahani2018DatadrivenDR} which show that for data-driven DRO problems (i.e. when $P_D$ is a sum of point masses), \eqref{eq:robust} can be approximated to any accuracy by performing adversarial training on the data samples themselves. Concretely, fix $\gamma > 0$ and suppose $P_D = \frac{1}{N}\sum_{i=1}^N \delta_{x_i}$. Let $x_i^* =  \argmax_{x'} \cbracket*{d(x', g_\phi(f_\theta(x'))) - \gamma c(x', x_i)}$. Then, the worst-case distortion-achieving distribution is $P^* = \frac{1}{N}\sum_{i=1}^N \delta_{x_i^*}$ over the ball of radius $\hat{\rho} = \mathbb{E}_{P_D}[c(x_i^*, x_i)]$.

Hence, we focus on the Lagrangian \cite{CertifiedDRO}:
\begin{align}
    & \min_{\theta,\phi} \sup_P \cbracket*{\EE_P[d(X,g_\phi(f_\theta(X)))] - \gamma W_c(P,P_D)} \nonumber \\ &= \min_{\theta,\phi} \EE_{P_D} \bracket*{\sup_{x'} \cbracket*{d(x', g_\phi(f_\theta(x'))) - \gamma c(x', x)}}
    \label{eq:lagrangian}
\end{align}
which now has a tractable alternating min-max procedure, described in Alg.~\ref{alg:DRO}. We call this the end-to-end DRO compressor since it uses nothing other than the same model architecture as the standard compressor and is similarly trained end-to-end, except it solves the objective in \eqref{eq:lagrangian}.

\subsection{Structured Compressors}
Here, we introduce an idea inspired by the concept of successive refinement from information theory, in which a source can be first coarsely described using a small number of bits, followed by additional bits that iteratively provide a more accurate description. Successive refinement through DNNs has recently been applied in  joint source-channel coding  \cite{Kurka2019SuccessiveRefinement}. We, however, investigate such encoding structures for DNN universality and robustness. 

In the context of the robust problem \eqref{eq:robust}, the idea is to first describe a smaller set of sources using some number of bits, followed by larger balls around $P_D$ using more bits. Imposing such a structure on the encoder, one might be able to more easily optimize for a robust compressor. We will consider a simplified case where we have two levels of refinement; the first stage of the compressor uses $R_1$ bits to describe the center of the ball $P_D$, and the second stage uses $R_1+R_2$ bits to describe the ball $\{P:W_c(P,P_D)<\rho\}$.


To implement this, we use a single encoder DNN $f_1$ and two decoder DNNs $g_1$ and $g_2$, shown in Fig.~\ref{fig:structured}. This is similar to the multiple-decoders architecture described in \cite{Kurka2019SuccessiveRefinement}. 
\begin{figure}[t]
    \centering
    \includegraphics[width=0.4\textwidth]{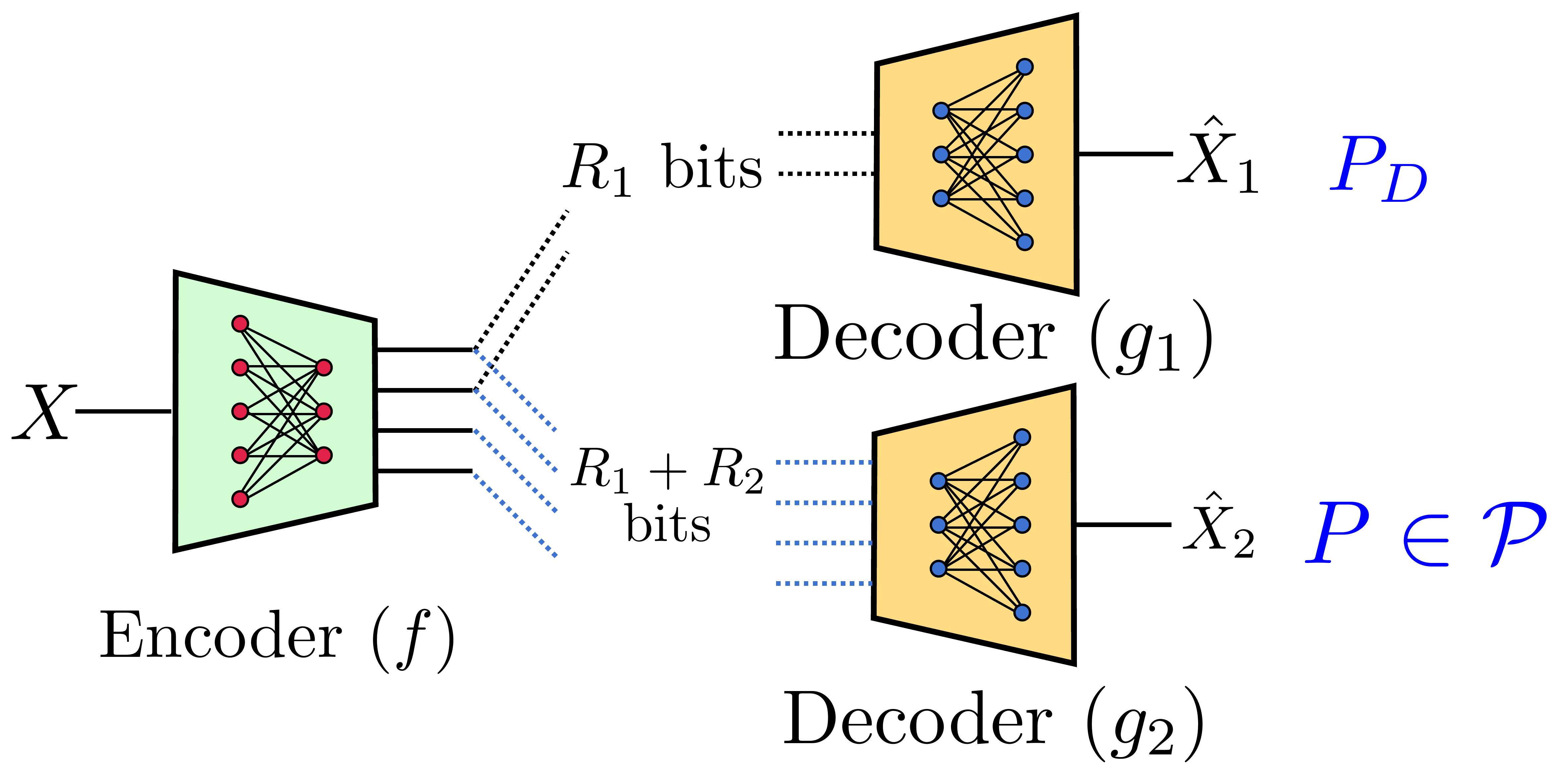}
    \caption{Structured compressor architecture with single encoder and multiple decoders.   }
    \label{fig:structured}

\end{figure}
In this model, we take the first $m_1$ entries of the output of $f_1$ to form $z_1$ (input to $g_1$) and the entire output ($m_1+m_2$ entries) of $f_1$ to form $[z_1, z_2]$ (input to $g_2$). To train, we sample a batch from $P_D$ and choose a decoder with probability $1/2$. If $g_1$ is chosen, we forward pass through the upper stage and minimize \eqref{eq:standard}. Otherwise, if $g_2$ is chosen, we forward pass through $g_2$ and minimize the robust objective \eqref{eq:lagrangian} by performing the procedure in Alg.~\ref{alg:DRO}.

\section{Experimental Results}
\begin{figure*}[!ht]
    \centering
    \includegraphics[width=1.0\linewidth]{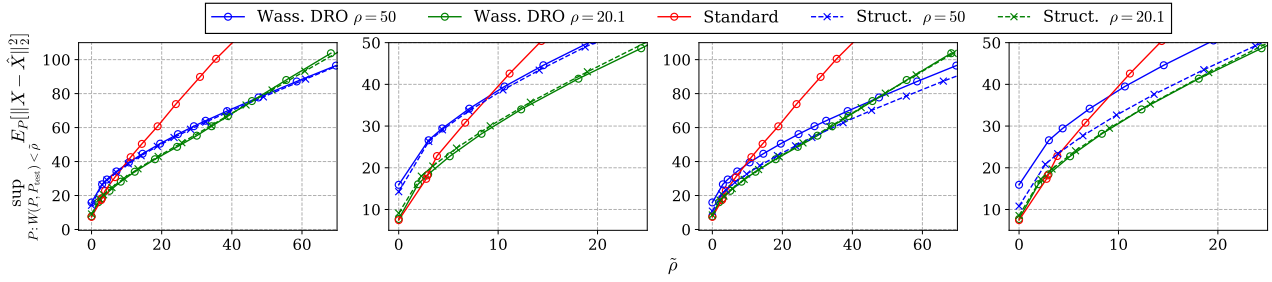}
    \vspace{-2.5em}
    \caption{Worst-case distortion comparisons for $R = 35.85$ bpi. $2^{\textrm{nd}}$ and $4^\textrm{th}$ plots are zoomed in verstions of the $1^{\textrm{st}}$ and $3^{\textrm{rd}}$. For structured codes, first two plots have rate split of 14-22 (14 bits for $1^{\textrm{st}}$ stage and 22 additional bits for $2^{\textrm{nd}}$ stage), last two plots have rate split 25-11.}
    \label{fig:WCD_short}
\end{figure*}


In our experiments, we use the MNIST dataset to represent the source $P_D$ at the center of the ball. We use the squared-error loss to represent the distortion $d$ and transport cost $c$ with $p=2$, i.e. $c(x, \hat{x}) = d(x,\hat{x}) = \|x-\hat{x}\|_2^2$. We analyzed structured compressors of various rate allocations, by varying the number of entries per stage while keeping their quantization levels fixed at $|\mathcal{W}|=12$. We train various models with different values of $\gamma$ to find those that enforce robustness to balls of radius $20.1$ and $50$. 

\subsection{End-to-end vs. Structured Models} In Fig.~\ref{fig:WCD_short}, we evaluate the worst-case test distortion \[\sup_{P:W_c(P, P_{\textrm{test}}) < \tilde{\rho}} \mathbb{E}_{P}[\|X-\hat{X}\|_2^2]\]
at various radii $\tilde{\rho}$ for standard, end-to-end DRO (denoted Wass.~DRO), and structured compressors. Note that $\tilde{\rho}$ is not the same as $\rho$, which is the radius the models were trained to be robust to. For structured, we evaluate the $2^{\textrm{nd}}$ stage since that compressor is designed to be robust. For larger values of $\tilde{\rho}$, the standard compressor achieves a much larger distortion than all the robust models, demonstrating that robustness to OOD sources has been achieved. However, for small values of $\tilde{\rho}$, the robust models are all strictly worse than the standard compressor, and when more robustness is desired, the performance on the center $P_D$ gets worse. In other words, to enforce robustness to a set of sources, one must trade off performance on the center. Also, compressors robust to larger radii do not perform as well as a compressors robust to smaller radii when evaluated on data with smaller perturbations.

Another interesting observation is that depending on the rate allocation and trained radius, the structured compressor either performs similarly to or better than Wass.~DRO. For the trained radius of $50$, allocating more bits to the first stage actually helps the second stage be more robust than Wass.~DRO across all test radii $\tilde{\rho}$ (compare blue curves). However, this is not observed for the trained radius of $\rho=20.1$, where the structured compressor's second stage performs similarly regardless of the rate allocation. One potential explanation for this phenomenon is that adding structure helps the model find more optimal saddle points for the optimization problem in (\ref{eq:robust}), which is non-convex in the neural network parameters.

\subsection{Comparison with Data Augmentation}
We compare our end-to-end and structured models with a simple data augmentation scheme: adding Gaussian noise to the dataset during training. Specifically, for training the robust models to radius $\rho$, we add Gaussian noise $\mathcal{N}(0, \sqrt{\frac{\rho}{n}}I_n)$, where $n$ is the dimension of the source. This ensures that the augmented distribution, $P_D * \mathcal{N}(0, \sqrt{\frac{\rho}{n}}I_n)$, lies on the boundary of the Wasserstein ball.

As shown in Fig.~\ref{fig:DA}, the worst-case distortion achieved by this scheme at $\tilde{\rho} = \rho$, while lower than the standard model, is higher than that achieved by the structured and end-to-end models. This implies that while simply augmenting the dataset with Gaussian noise can help robustness, $P_D * \mathcal{N}(0, \sqrt{\frac{\rho}{n}}I_n)$ does not achieve the worst-case distortion over the Wasserstein ball in general.

\begin{figure}[t]
    \centering
    \includegraphics[width=0.8\linewidth]{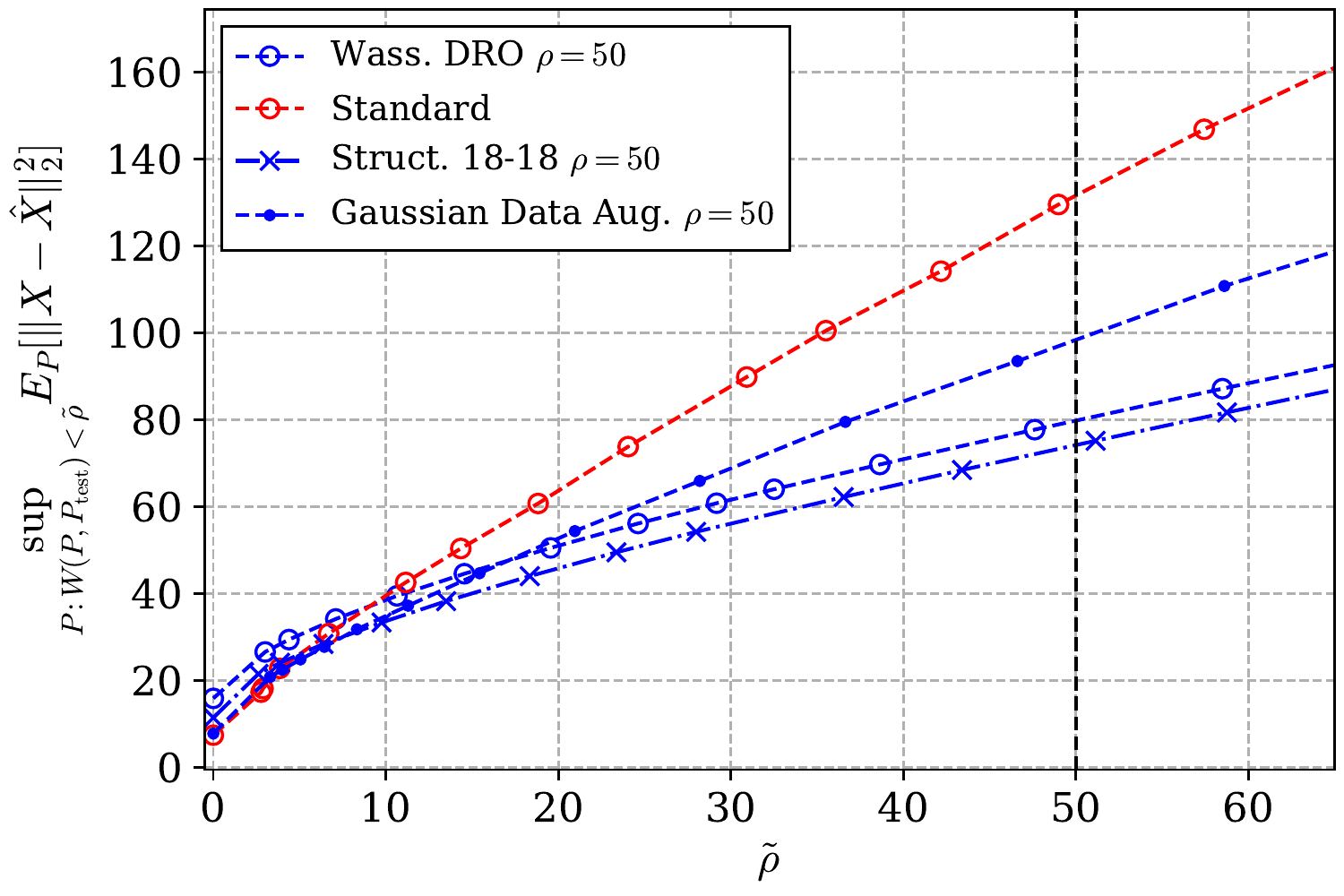}
    \caption{Comparison with AWGN data augmentation.}
    \label{fig:DA}
\end{figure}

\section{Group Shifts}
\label{sec:group}
Another choice of $\mathcal{P}$ contains distributions induced by group actions $c \in \mathcal{G}$ of $X \sim P_D$, i.e. $\mathcal{P} = \{P_{c \cdot X} : c \in \mathcal{G}\}$. In this paper, we specifically examine the case in which the group actions $\mathcal{G}$ contain all spatial rotations $c_\varphi$ of the images from angles $\varphi \in [-\pi/2, \pi/2)$.  The robust objective then becomes 
\begin{equation}
    \min_{\theta, \phi} \sup_{-\frac{\pi}{2} \leq \varphi < \frac{\pi}{2}} \EE_{ P_D}\bracket*{d(c_\varphi X, g_\phi(f_\theta(c_\varphi X)))} 
    \label{eq:group}
\end{equation}

Using a standard autoencoder architecture, we can solve the inner max directly using global optimization methods over $\varphi$, which is tractable since $\varphi$ is one-dimensional. 

The structured architecture, in this case, attempts to predict the group action $c$ with some estimate $c_{\textrm{pred}}$ and then undo the action, i.e. $c_{\textrm{pred}}^{-1} c X$. Then, $c_{\textrm{pred}}$ is compressed using $R_1$ bits, while $c_{\textrm{pred}}^{-1} c X$ is compressed using a standard compressor (trained on $P_D$) with $R_2$ bits. At the decoder, both are decompressed and the reconstruction is $\hat{X} = \hat{c}_{\textrm{pred}}\widehat{c_{\textrm{pred}}^{-1} c X}$. The spatial transformer architecture \cite{spatial_transformer} was used to estimate and undo the group actions, shown in Fig.~\ref{fig:structured_rotation}. In our case, we use $R_1 = 7.17$, and $R_2 = 28.68$ as our bit allocation. This roughly corresponds to quantizing the predicted group shift angle to every 2.5 degrees. 

\begin{figure}[t]
    \centering
    \includegraphics[width=0.5\textwidth]{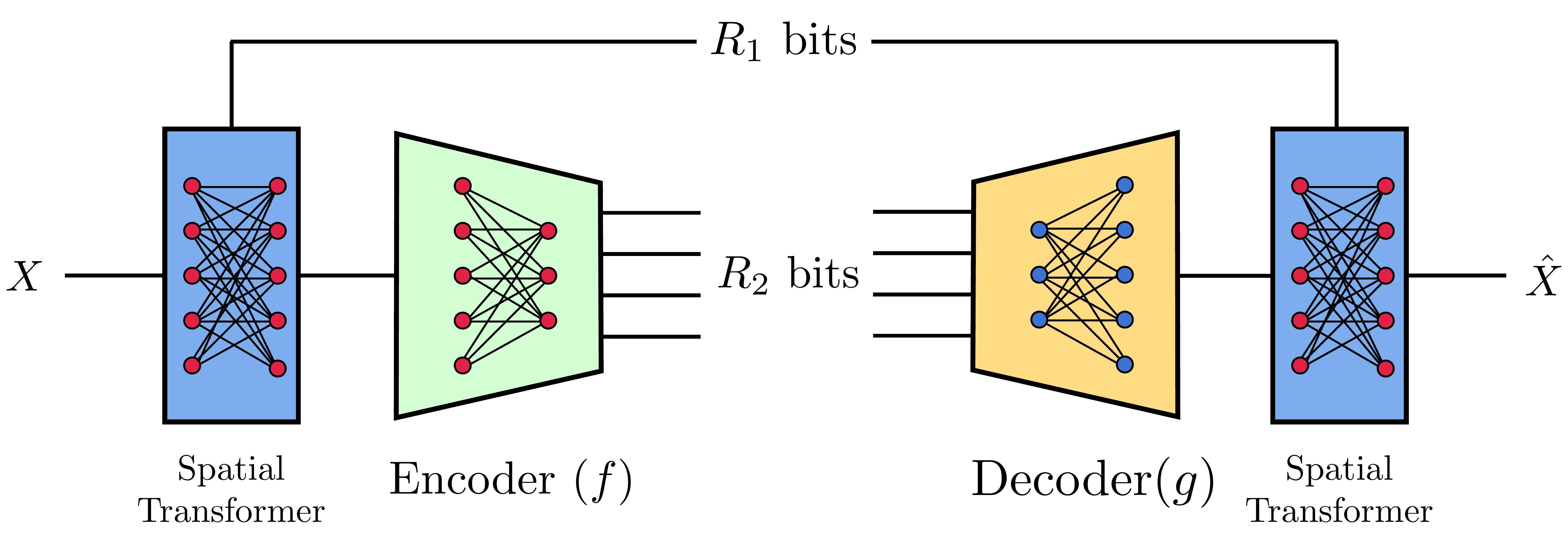}
    \caption{Structured architecture for group shifts: a portion ($R_1$) of the bits are used to encode the group shift.}
    \label{fig:structured_rotation}
\end{figure}

\begin{figure}[t]
    \centering
    \includegraphics[width=0.38\textwidth]{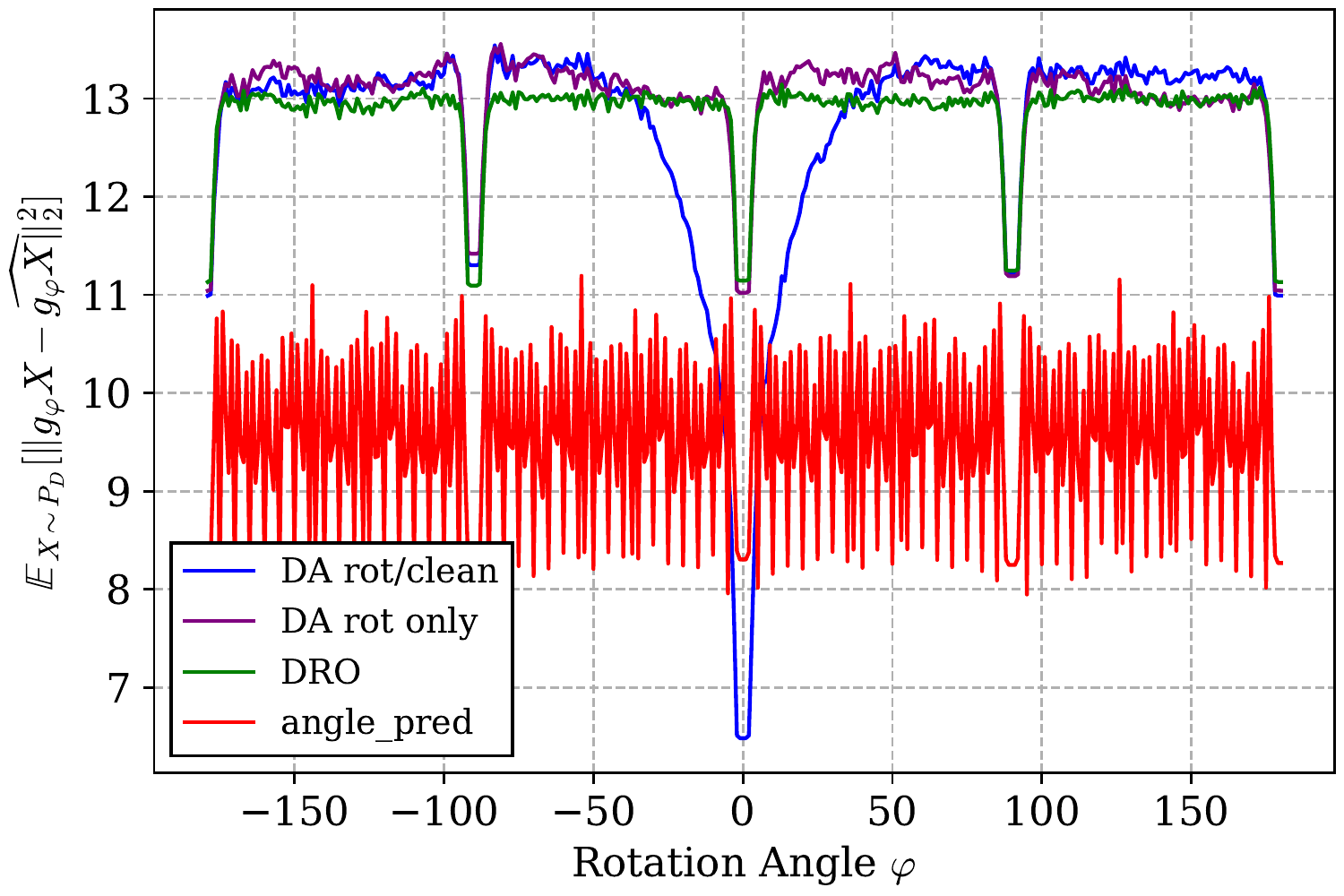}
    \caption{Average distortion over rotated images with angles $\varphi \in [-\pi/2, \pi/2)$. ``angle\_pred'' corresponds to the structured architecture for group shifts, where ``DRO'' corresponds to end-to-end training. Noisiness in ``angle\_pred'' is due to quantization of the predicted angle.}
    \label{fig:rotations}
\end{figure}

In Fig.~\ref{fig:rotations}, we show the distortion achieved over all angles $\varphi \in [-\pi/2, \pi/2)$, i.e. 
\begin{equation}
    \mathbb{E}_{P_D} [\norm*{c_\varphi X - g_\phi(f_\theta(c_\varphi X))}^2]
\end{equation} as a function of $\varphi$. We compare the end-to-end DRO and structured models against two models trained with data augmentation: one which duplicates each MNIST image with a randomly rotated one and concatenates it to unrotated MNIST, and one which only contains randomly rotated MNIST images. As can be seen, the end-to-end DRO performs roughly the same as the data augmentation with rotations only. However, using the structured architecture in Fig.~\ref{fig:structured_rotation} is able to achieved a much lower distortion uniformly over all $\varphi$. As with the Wasserstein ball, this implies that for robustness, using a structured code can indeed achieve superior performance. Also, this performance improvement cannot be attributed to the spatial transformer architecture. This is because we tried the end-to-end DRO with spatial transformers, but with no communication of the predicted angle, i.e. $R_1 = 0$, and this performed similarly to the model without spatial transformers. 

Interestingly, data augmentation with both rotated and unrotated images was able to achieve similar performance to the end-to-end DRO, but achieve very low distortion on unrotated images. 

\section{Theoretical Tradeoffs for Uniform Sources} \label{sec:tradeoff}
	In this section, we discuss tradeoffs in minimax-distortion single-shot compression of uniform sources. We show that in order to minimize worst-case distortion over an ambiguity set $\mathcal{P} = \{\Unif(0, 1), \Unif(0, 1+\delta)\}$, where $0 < \delta < \frac{1}{N}$, using a scalar quantizer, one must sacrifice performance on $\Unif(0, 1)$. For $N$-level scalar quantizers of $\Unif(0, \alpha)$, we only need to consider those given by interval partitions of $(0,\alpha)$  of lengths $\vec{p}(\alpha)=(p_1,\dots,p_N)$ and centers at their midpoints \cite{GyorgyLinder}. The optimal quantizer is $\vec{p}^*(\alpha)=(\frac{\alpha}{N}, \dots, \frac{\alpha}{N})$. We will denote these quantizers as $Q_{\vec{p}(\alpha)}$. 
	
	Define the expected squared-error distortion of a quantizer $Q$ on $\Unif(0, \alpha)$ as \[D(\alpha, Q):= \mathbb{E}_{X\sim \Unif(0,\alpha)} [(X-Q(X))^2]\] and define \[V(Q) := \max_{\alpha \in \{1, 1+\delta\}} D(\alpha, Q)\] as the worst-case distortion over $\mathcal{P}$. Broadly, the argument is that the individually-optimal quantizers of $\mathcal{P}$ are unique, while failing to be minimax optimal. 
	
	\begin{lemma}\emph{\cite{GyorgyLinder}}
	For all $N$-level scalar quantizers $Q$, 
	\begin{equation}
	    D(\alpha, Q) \geq \frac{\alpha^2}{12N^2}
	\end{equation} with equality if and only if $Q = Q_{\vec{p}^*(\alpha)}$. This holds more generally for all scalar quantizers $Q$ with entropy (of the output marginal distribution of $Q$) less than or equal to $\log N$.
	\end{lemma}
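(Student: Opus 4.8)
The plan is to first invoke the structural reduction recalled just above (from \cite{GyorgyLinder}): for the uniform source $\Unif(0,\alpha)$, any $N$-level scalar quantizer can be replaced, without increasing its distortion, by one of the form $\Qopt[']{} $-type interval quantizer $Q_{\vec{p}(\alpha)}$ that partitions $(0,\alpha)$ into intervals of lengths $\vec{p}=(p_1,\dots,p_N)$ with $\sum_i p_i=\alpha$ and reconstructs each cell at its midpoint, and moreover this replacement is strict whenever some cell fails to be an interval or some reconstruction point fails to be the centroid. So it suffices to prove the bound over this family, and the "if and only if" will follow from the strictness of the reduction together with the strictness of the single inequality below.

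Next I would establish the core estimate. On the $i$-th cell the conditional law of $X$ is uniform on an interval of length $p_i$, with midpoint reconstruction, so its contribution to the mean-squared error is $\frac{p_i}{\alpha}\cdot\frac{p_i^2}{12}$; summing gives $D(\alpha,Q_{\vec{p}(\alpha)})=\frac{1}{12\alpha}\sum_{i=1}^N p_i^3$. Since $t\mapsto t^3$ is strictly convex on $[0,\infty)$, Jensen's inequality (equivalently the power-mean inequality) gives $\frac{1}{N}\sum_i p_i^3\ge\big(\frac1N\sum_i p_i\big)^3=(\alpha/N)^3$, i.e. $\sum_i p_i^3\ge\alpha^3/N^2$, with equality if and only if $p_1=\dots=p_N=\alpha/N$. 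Dividing by $12\alpha$ yields $D(\alpha,Q)\ge\frac{\alpha^2}{12N^2}$, with equality exactly when $\vec{p}=\vec{p}^*(\alpha)$, that is $Q=\Qopt{\alpha}$.

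For the entropy-constrained strengthening I would not assume finitely many levels. Let $C_j$ be the cells of $Q$ and $q_j:=\PP(X\in C_j)$; replacing each reconstruction value by the conditional mean only decreases distortion, so $D(\alpha,Q)\ge\sum_j q_j\,\Var(X\mid X\in C_j)$. As $X$ is uniform, $X$ conditioned on $\{X\in C_j\}$ is uniform on the measurable set $C_j$, whose Lebesgue measure is $\ell_j=\alpha q_j$; a short rearrangement argument (the interval of length $\ell_j$ centered at the mean minimizes $\int(x-c)^2\,dx$ among sets of measure $\ell_j$) gives $\Var(X\mid X\in C_j)\ge\ell_j^2/12=\alpha^2 q_j^2/12$, hence $D(\alpha,Q)\ge\frac{\alpha^2}{12}\sum_j q_j^3$. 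Writing $\sum_j q_j^3=\EE[q_J^2]$ for $J$ drawn from $\vec{q}$ and applying Jensen to the convex map $t\mapsto e^{2t}$ gives $\EE[q_J^2]=\EE[e^{2\log q_J}]\ge e^{2\EE[\log q_J]}=e^{-2H}\ge e^{-2\log N}=N^{-2}$, where $H\le\log N$ is the output entropy. This yields $D(\alpha,Q)\ge\frac{\alpha^2}{12N^2}$; equality forces $q_J$ to be a.s. constant, hence $Q$ to use exactly $N$ equiprobable interval cells with midpoint reconstruction, recovering $\Qopt{\alpha}$.

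The inequality chains are short; the real work — and the main obstacle — is justifying the two reductions cleanly: that for mean-squared error it is without loss of generality (and strictly so) to use interval cells with centroid reconstruction (the nearest-neighbor and centroid Lloyd conditions), and, in the general entropy case, the rearrangement inequality bounding the conditional variance below by $\ell_j^2/12$. Both are standard facts about scalar quantization of absolutely continuous sources, the first being exactly the reduction already cited above, and together their strictness is what upgrades the bound to an "if and only if".
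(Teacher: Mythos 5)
The paper does not prove this lemma at all --- it is quoted from \cite{GyorgyLinder} and used as a black box --- so there is no internal proof to compare against. Your argument is correct and is essentially the standard proof from that reference. The fixed-rate half (conditional variance $p_i^2/12$ per interval cell, then strict convexity of $t\mapsto t^3$ to force equal lengths $\alpha/N$) is complete modulo the reduction to interval cells with centroid reconstruction, and the entropy-constrained half actually subsumes it: the chain $D(\alpha,Q)\ge\sum_j q_j\Var(X\mid X\in C_j)\ge\frac{\alpha^2}{12}\sum_j q_j^3\ge\frac{\alpha^2}{12}e^{-2H}\ge\frac{\alpha^2}{12N^2}$ needs no a priori structure on the cells, and its equality analysis (Jensen forces equiprobable cells, $H=\log N$ forces exactly $N$ of them, the rearrangement step forces intervals, the centroid step forces midpoints) delivers the full ``if and only if.'' You are right that the two deferred facts --- the Lloyd-type reduction and the bound $\Var(X\mid X\in C_j)\ge \ell_j^2/12$ for a uniform law on a measurable set of measure $\ell_j$ --- are where the real content lies; both are standard for absolutely continuous scalar sources and are exactly what \cite{GyorgyLinder} establishes. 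One cosmetic point: since the entropy argument already handles arbitrary $N$-level quantizers (their output entropy is at most $\log N$), the first paragraph's reduction is not logically needed and could be dropped or kept only as motivation.
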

	
    \setcounter{lemma}{1}
    \begin{lemma}\label{lemma:WCD}
    The worst-case distortion using the optimal quantizer for $\Unif(0,1+\delta)$ is less than the worst-case distortion using the optimal quantizer for $\Unif(0,1)$, i.e.
        \begin{equation}
        V(Q_{\vec{p}^*({1+\delta})}) < V(Q_{\vec{p}^*(1)}).
        \end{equation}
    \end{lemma}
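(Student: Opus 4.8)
The plan is to use Lemma~1 to collapse the claim to a single polynomial inequality in $\delta$, then verify that inequality from closed-form expressions for the two relevant distortions. First I would pin down the two worst cases: by Lemma~1, $D(1,Q_{\vec{p}^*(1)}) = \tfrac{1}{12N^2}$ while $D(1+\delta,Q_{\vec{p}^*(1)}) \geq \tfrac{(1+\delta)^2}{12N^2} > \tfrac{1}{12N^2}$ since $\delta>0$, so $V(Q_{\vec{p}^*(1)}) = D(1+\delta,Q_{\vec{p}^*(1)})$; and $D(1+\delta,Q_{\vec{p}^*(1+\delta)}) = \tfrac{(1+\delta)^2}{12N^2}$ exactly by the equality case of Lemma~1, so $V(Q_{\vec{p}^*(1+\delta)}) = \max\{D(1,Q_{\vec{p}^*(1+\delta)}),\ \tfrac{(1+\delta)^2}{12N^2}\}$. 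It therefore suffices to prove (a) $\tfrac{(1+\delta)^2}{12N^2} < D(1+\delta,Q_{\vec{p}^*(1)})$ and (b) $D(1,Q_{\vec{p}^*(1+\delta)}) < D(1+\delta,Q_{\vec{p}^*(1)})$. Inequality (a) is immediate from Lemma~1, since $Q_{\vec{p}^*(1)}\neq Q_{\vec{p}^*(1+\delta)}$ is not the optimal quantizer for $\Unif(0,1+\delta)$ and hence has distortion there strictly above the bound. Everything thus reduces to (b).

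The hypothesis $0<\delta<\tfrac1N$ enters in establishing closed forms for the two distortions in (b): it is exactly what guarantees that when the uniform $N$-cell partition of one interval is applied to the slightly longer/shorter one, only the last cell is affected. Concretely, the uniform partition of $(0,1)$ applied to $\Unif(0,1+\delta)$ sends all of $(1,1+\delta)$ to the last reconstruction point $\tfrac{2N-1}{2N}$ and leaves the first $N$ cells intact, while the uniform partition of $(0,1+\delta)$ applied to $\Unif(0,1)$ has its first $N-1$ cells contained in $(0,1)$ and sends the remaining tail of $(0,1)$ into the last cell. Splitting each distortion integral into full cells (each contributing $h^3/12$ for cell width $h$) plus a single boundary cell handled by direct integration of $\int (x-c)^2\,dx$, and writing $M := 2N-1$, I would obtain
\[
D(1,Q_{\vec{p}^*(1+\delta)}) = \frac{M(1+\delta)^3 + (1-M\delta)^3}{24N^3}, \qquad D(1+\delta,Q_{\vec{p}^*(1)}) = \frac{M + \big(1+(M+1)\delta\big)^3}{24N^3(1+\delta)}.
\]

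Substituting these into (b) and clearing the positive denominator $24N^3(1+\delta)$, the claim becomes $(1+\delta)\big[M(1+\delta)^3 + (1-M\delta)^3\big] < M + \big(1+(M+1)\delta\big)^3$. Expanding both sides as polynomials in $\delta$, one can factor out $M+1$ from each side, and after dividing it out the difference (right side minus left side) simplifies to
\[
2\delta + 3\delta^2 + (2M^2 - 2M + 1)\,\delta^3 + M(M-1)\,\delta^4 ,
\]
which is strictly positive for every $\delta>0$ because $M\geq 1$ forces $2M^2-2M+1 = 2M(M-1)+1 \geq 1$ and $M(M-1)\geq 0$. This gives (b) and hence the lemma. (Notably, only $\delta>0$ is needed in this last step; the bound $\delta<1/N$ is used solely to justify the closed forms above.)

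The main obstacle I expect is Step~2: correctly identifying which cells are ``full'' for small $\delta$ and evaluating the lone boundary-cell integral without sign or bookkeeping errors — in particular noting that the boundary offset $\tfrac{h}{2}-\delta$ can have either sign, which the cube absorbs. Once the two closed forms are in hand, Step~3 is a routine polynomial expansion whose outcome has manifestly nonnegative coefficients, so no inequality estimates are required.
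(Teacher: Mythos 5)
Your proof is correct and follows the same overall route as the paper's: reduce the claim to $D(1,Q_{\vec{p}^*(1+\delta)}) < D(1+\delta,Q_{\vec{p}^*(1)})$ (your steps (a) and (b) are exactly the paper's reduction via $V(Q_{\vec{p}^*(1)})=D(1+\delta,Q_{\vec{p}^*(1)})$), then compute both distortions in closed form using the fact that $\delta<1/N$ confines the mismatch to a single boundary cell; your two formulas agree with the paper's $f_1(N,\delta)/(24N^3)$ and $f_2(N,\delta)/(24N^3)$ after simplification. The one place you genuinely diverge is the final comparison: the paper argues $\frac{\partial f_2}{\partial \delta} \geq \frac{\partial f_1}{\partial \delta} \geq 0$ together with $f_1(N,0)=f_2(N,0)$, which as written only yields the non-strict $f_1 \leq f_2$, whereas your direct expansion of the difference as $(M+1)\bigl[2\delta+3\delta^2+(2M^2-2M+1)\delta^3+M(M-1)\delta^4\bigr]$ with $M=2N-1\geq 1$ is manifestly strictly positive for $\delta>0$, delivering exactly the strict inequality the lemma asserts. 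So your verification step is, if anything, tighter than the paper's at the point where strictness matters.
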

    
    \begin{proof}
        See appendix A.
    \end{proof}
    
    \begin{definition}[Minimax quantizer]
	The minimax $N$-level quantizer is defined as 
	\begin{equation}
	    Q^* = \argmin_{Q: |Q|=N} V(Q).
	\end{equation} 
	\end{definition}

	\begin{thm}[Trade-off] \label{claim:tradeoff}
	The minimax $N$-level quantizer achieves a strictly greater distortion when compressing $\Unif(0,1)$ than the $N$-level quantizer optimal for $\Unif(0,1)$, i.e. 
	\begin{equation}
	\vspace{-.2cm}
	    D(1, Q^*) > D(1, Q_{\vec{p}^*(1)}).
	\end{equation}
	\end{thm}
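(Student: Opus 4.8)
The plan is a short proof by contradiction that pits the \emph{uniqueness} clause of Lemma~1 against the \emph{strict} inequality of Lemma~\ref{lemma:WCD}. I would begin with the easy half: by Lemma~1 the map $Q\mapsto D(1,Q)$ over $N$-level quantizers is minimized precisely at $\Qopt{1}$, with minimum value $\tfrac{1}{12N^2}$; since $Q^*$ is itself an $N$-level quantizer, this already gives $D(1,Q^*)\ge D(1,\Qopt{1})$. Hence the theorem reduces to ruling out equality, i.e.\ to showing $Q^*\neq\Qopt{1}$.

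Suppose, for contradiction, that $D(1,Q^*)=D(1,\Qopt{1})=\tfrac{1}{12N^2}$. The equality case of Lemma~1 then forces $Q^*$ to coincide with $\Qopt{1}$ on the support $(0,1)$ of $\Unif(0,1)$, which is all that $D(1,\cdot)$ depends on; taking the same convention for extending $\Qopt{1}$ to $(0,1+\delta)$ as in Lemma~\ref{lemma:WCD}, this identifies $Q^*=\Qopt{1}$, and in particular $V(Q^*)=V(\Qopt{1})$. On the other hand $Q^*$ minimizes the worst-case distortion $V$ over all $N$-level quantizers, so $V(Q^*)\le V(\Qopt{1+\delta})$. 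Combining these with Lemma~\ref{lemma:WCD},
\[
V(\Qopt{1})=V(Q^*)\le V(\Qopt{1+\delta})<V(\Qopt{1}),
\]
a contradiction. Therefore $Q^*\neq\Qopt{1}$, and so $D(1,Q^*)>D(1,\Qopt{1})$, which is the claim.

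The logic involves no computation, so the only delicate points — and where I expect the write-up to spend its effort — are bookkeeping issues around the domain $(0,1+\delta)$ versus $(0,1)$. First, the restriction of $Q^*$ to $(0,1)$ could a priori use fewer than $N$ distinct values; but applying Lemma~1 with the actual number $M\le N$ of levels that $Q^*$ uses on $(0,1)$ gives $D(1,Q^*)\ge\tfrac{1}{12M^2}\ge\tfrac{1}{12N^2}$, the first inequality being strict unless $M=N$, so equality in the theorem's hypothesis still pins down $M=N$ and hence $Q^*|_{(0,1)}=\Qopt{1}$. Second, one must be consistent about how quantizers on $(0,1)$ are extended to $(0,1+\delta)$ so that the identity $V(Q^*)=V(\Qopt{1})$ used above is legitimate; this should match whatever convention appendix~A adopts in proving Lemma~\ref{lemma:WCD}. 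Both are routine once spelled out, and neither disturbs the core argument.
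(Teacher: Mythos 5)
Your proposal is correct and follows essentially the same route as the paper's proof: reduce to ruling out equality via Lemma~1, invoke the uniqueness clause to force $Q^*=\Qopt{1}$, and contradict minimaxity of $Q^*$ using the strict inequality of Lemma~\ref{lemma:WCD}. The only difference is that you make explicit the easy inequality $D(1,Q^*)\ge D(1,\Qopt{1})$ and the domain/level-count bookkeeping, which the paper leaves implicit.
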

	\begin{proof} \label{proof:tradeoff}
	Suppose the opposite is true, i.e. $D(1, Q^*) = D(1, Q_{\vec{p}^*(1)})$. By Lemma 1, this implies that $Q^* =  Q_{\vec{p}^*(1)}$. By Lemma 2, we know there exists a quantizer that achieves lower worst-case distortion than $Q_{\vec{p}^*(1)}$, in particular, $Q_{\vec{p}^*({1+\delta})}$. Hence $Q^*$ cannot be the minimax quantizer and we have a contradiction. 
	\end{proof}
 Hence, one must  trade off minimizing worst-case distortion of multiple uniform sources with minimizing distortions of individual sources. 

\vspace{-.2cm}
\section{Conclusion}
\vspace{-.1cm}
In this paper, we introduced and studied the problem of OOD-robust compression. We considered robustness to distributions pertaining to two ambiguity sets: those within a Wasserstein ball of a certain radius around the data distribution, and those induced by group shifts of the data. We proposed two methods based on (1) distributionally-robust optimization and (2) structured codes.  We showed experimentally that both can help with robustness to OOD samples, and that structured codes can outperform  minimizing a robust objective. Theoretical tradeoffs for achieving robustness support tradeoffs observed empirically. There are many further avenues for investigation for OOD-robust compression beyond the Wasserstein ball and group shifts. Moreover, the connection between rate, robustness, and code structure is not clear, much less any theoretical understanding. 

\section*{Acknowledgements}
This material is based upon work supported by a NSF Graduate Research Fellowship and NSF grant CCF-1910056. 

\bibliography{ref}
\bibliographystyle{icml2021}

\onecolumn

\appendix

\section{Proof of Lemma 2.} 
\setcounter{lemma}{1}
\begin{lemma}
    The worst-case distortion using the optimal quantizer for $\Unif(0,1+\delta)$ is less than the worst-case distortion using the optimal quantizer for $\Unif(0,1)$, i.e.
        \begin{equation}
        V(Q_{\vec{p}^*({1+\delta})}) < V(Q_{\vec{p}^*(1)}).
        \end{equation}
    \end{lemma}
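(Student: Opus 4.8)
The plan is to evaluate, for each of the two quantizers $\Qopt{1}$ and $\Qopt{1+\delta}$, the two distortions $D(1,\cdot)$ and $D(1+\delta,\cdot)$ that enter the maximum defining $V$. Two of these four numbers, $D(1,\Qopt{1})$ and $D(1+\delta,\Qopt{1+\delta})$, are handed to us by Lemma~1 as $\tfrac{1}{12N^2}$ and $\tfrac{(1+\delta)^2}{12N^2}$. Since $V(\Qopt{1})\ge D(1+\delta,\Qopt{1})$ trivially and $V(\Qopt{1+\delta})=\max\{D(1,\Qopt{1+\delta}),\,\tfrac{(1+\delta)^2}{12N^2}\}$, it suffices to prove that $D(1+\delta,\Qopt{1})$ strictly exceeds \emph{both} $\tfrac{(1+\delta)^2}{12N^2}$ and $D(1,\Qopt{1+\delta})$.

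The first of these is immediate: applying Lemma~1 at $\alpha=1+\delta$ gives $D(1+\delta,Q)\ge\tfrac{(1+\delta)^2}{12N^2}$ for every $N$-level quantizer $Q$, with equality only when $Q=\Qopt{1+\delta}$; since $\delta>0$ forces $\Qopt{1}\neq\Qopt{1+\delta}$ (their cell widths $\tfrac1N$ and $\tfrac{1+\delta}{N}$ differ), the inequality is strict. This needs no computation.

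The second inequality, $D(1,\Qopt{1+\delta})<D(1+\delta,\Qopt{1})$, is the crux, and I would get it by explicit integration. First describe how each quantizer acts on the mismatched source. $\Qopt{1}$ tiles $[0,1]$ with $N$ cells of width $\tfrac1N$ and midpoint reconstruction; extended to $\mathbb R$, every $x\in(1,1+\delta)$ lands in the rightmost cell and is mapped to $1-\tfrac1{2N}$, so $D(1+\delta,\Qopt{1})=\tfrac1{1+\delta}\big(\tfrac1{12N^2}+\tfrac13(\delta+\tfrac1{2N})^3-\tfrac1{24N^3}\big)=\tfrac1{1+\delta}\big(\tfrac1{12N^2}+\tfrac{\delta^3}{3}+\tfrac{\delta^2}{2N}+\tfrac{\delta}{4N^2}\big)$. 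On the other side $\Qopt{1+\delta}$ has cells of width $w=\tfrac{1+\delta}{N}$; using $\delta<\tfrac1N\le\tfrac1{N-1}$ one checks the boundary $(N-1)w$ lies strictly below $1$, so on $(0,1)$ the quantizer contributes $N-1$ full cells plus one cell truncated at $1$, giving $D(1,\Qopt{1+\delta})=\tfrac{(N-1)w^3}{12}+\tfrac13\big((\tfrac w2-\delta)^3+(\tfrac w2)^3\big)=\tfrac{(1+\delta)^2}{12N^2}-\tfrac{\delta}{6}(w-\delta)(w-2\delta)$. Forming the difference and multiplying through by $12N^2(1+\delta)$, everything collapses to
\[
D(1+\delta,\Qopt{1})-D(1,\Qopt{1+\delta})=\frac{2N(2N-3)\,\delta^3(2+\delta)+2\delta+3\delta^2+5\delta^3+2\delta^4}{12N^2(1+\delta)},
\]
which is positive term by term for $N\ge2$, and equals $\tfrac{2\delta+3\delta^2+\delta^3}{12(1+\delta)}>0$ when $N=1$. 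Chaining the two strict inequalities yields $V(\Qopt{1+\delta})<V(\Qopt{1})$.

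The main obstacle is the bookkeeping in this last step, together with the temptation to take a lossy shortcut. Bounding $D(1,\Qopt{1+\delta})\le\tfrac{(1+\delta)^2}{12N^2}$ is valid only for $\delta\le\tfrac1{2N-1}$ and fails beyond that, while the cruder monotonicity bound $D(1,\Qopt{1+\delta})\le\tfrac{(1+\delta)^3}{12N^2}$ (integrating the nonnegative integrand over the larger interval $(0,1+\delta)$) is already too weak to beat $D(1+\delta,\Qopt{1})$ for small $\delta$; so the exact value of both mismatched distortions must be carried through. One must also be careful that widening the source support does not relocate the reconstruction points of $\Qopt{1}$ — the rightmost cell merely absorbs the overflow mass on $(1,1+\delta)$ — since that overflow is precisely what makes $D(1+\delta,\Qopt{1})$ large.
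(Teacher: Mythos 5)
Your proof is correct and follows essentially the same route as the paper: both reduce the claim to showing $D(1+\delta,\Qopt{1})$ strictly dominates $D(1+\delta,\Qopt{1+\delta})$ (via Lemma~1 and uniqueness) and $D(1,\Qopt{1+\delta})$, and both establish the latter by computing the two mismatched distortions in closed form. The only difference is the final verification: the paper compares $\partial f_1/\partial\delta$ and $\partial f_2/\partial\delta$ and invokes equality at $\delta=0$ (leaving the derivative comparison as ``it can be verified''), whereas you expand the difference into a manifestly positive polynomial in $\delta$, which is a slightly more explicit finish of the same calculation.
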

\begin{proof} \label{proof:WCD}
    First, observe that on the RHS, $V(Q_{\vec{p}^*(1)}) = D(1+\delta, Q_{\vec{p}^*(1)})$. This is because 
    \begin{align*}
        D(1, \Qopt{1}) &= \frac{1}{12N^2} \\
        &\leq \frac{(1+\delta)^2}{12N^2} \\
        &= D(1+\delta, \Qopt{1+\delta}) \\
        &< D(1+\delta, \Qopt{1})
    \end{align*}
     where the last step follows from Lemma 1. 
    
    The last step above along with the fact that $V(Q_{\vec{p}^*(1)}) = D(1+\delta, Q_{\vec{p}^*(1)})$ implies that it suffices to show that $D(1, \Qopt{1+\delta}) < D(1+\delta, \Qopt{1})$. 
    
    To show this, observe that for $N$-level quantizers $Q_\vec{p}$ defined by a sequence of disjoint but adjacent intervals of lengths $p_1,\dots,p_N$ with centers at the midpoints of the intervals, $D(\alpha, Q_{\vec{p}})$ is as follows:
	
	\begin{equation*}
	    D(\alpha, Q_{\vec{p}}) = 
	    \begin{dcases}
	        \frac{1}{3\alpha}\bracket*{\paran*{\alpha-\frac{1}{2}p_N-\sum_{i=1}^Np_i}^3 - \paran*{\frac{1}{2}p_N}^3} + \frac{1}{12\alpha}\sum_{i=1}^N p_i^3 & \text{if }\alpha \geq \sum_{i=1}^N p_i \vspace{1em} \\ 
	        \frac{1}{3\alpha}\bracket*{\paran*{\alpha-\frac{1}{2}p_K-\sum_{i=1}^Kp_i}^3 + \paran*{\frac{1}{2}p_K}^3} + \frac{1}{12\alpha}\sum_{i=1}^{K-1} p_i^3 & \text{if } \sum_{i=1}^{K-1}p_i \leq \alpha < \sum_{i=1}^K p_i, 2 \leq K \leq N
	    \end{dcases}
	\end{equation*}
	This is essentially the distortion for all the intervals that are completely contained in the support of $\Unif(0,\alpha)$, plus the distortion for the remaining mismatched part. 
	
	$D(1+\delta, \Qopt{1})$ falls into the first case, since $\alpha=1+\delta$ and each $p_i = \frac{1}{N}$ for $\vec{p}^*(1)$. As for $D(1, \Qopt{1+\delta})$, this falls into the second case with $K=N$ because we assumed $\delta < \frac{1}{N}$. Using these formulas,
	\[D(1, \Qopt{1+\delta}) = \frac{1}{24N^3}f_1(N,\delta)\]
	\[D(1+\delta, \Qopt{1}) = \frac{1}{24N^3}f_2(N,\delta)\]
	where 
	\[f_1(N,\delta) = \paran*{2N-(1+\delta)(2N-1)}^3+(2N-1)(1+\delta)^3 \]  
	\[f_2(N,\delta) = \paran*{(1+\delta)^{2/3}2N - (1+\delta)^{-1/3}(2N-1)}^3+(2N-1)(1+\delta)^{-1}\]
	
	The derivatives with respect to $\delta$ are 
	\[\frac{\partial f_1}{\partial \delta} = -3(2N-(1+\delta)(2N-1))^2(2N-1)+3(2N-1)(1+\delta)^2\]
	\[\frac{\partial f_2}{\partial \delta} = \paran*{(1+\delta)^{2/3}2N-(1+\delta)^{-1/3}(2N-1)}^2(4N(1+\delta)^{-1/3}+(1+\delta)^{-4/3}(2N-1)) - (2N-1)(1+\delta)^{-2}\]
	It can be verified that for $0 \leq \tilde{\delta} \leq \frac{1}{N}$, $\frac{\partial f_2}{\partial \delta} \Bigr|_{\delta=\tilde{\delta}} \geq \frac{\partial f_1}{\partial \delta} \Bigr|_{\delta=\tilde{\delta}} \geq 0$.
	Furthermore, $f_1(N,0) = f_2(N,0)$, and hence $f_1(N,\tilde{\delta}) \leq f_2(N,\tilde{\delta})$ for $0 \leq \tilde{\delta} \leq \frac{1}{N}$, which concludes the proof.

    \end{proof}
    


\end{document}